\DeclareMathOperator{\argsort}{argsort}
\newcolumntype{Y}{>{\centering\arraybackslash}X}
\theoremstyle{plain}
\newtheorem{theorem}{Theorem}[section]
\newtheorem{lemma}[theorem]{Lemma}
\theoremstyle{definition}
\theoremstyle{remark}
\icmltitlerunning{Model Stealing Defenses with Gradient Redirection}
\begin{document}

\twocolumn[
\icmltitle{How to Steer Your Adversary: Targeted and Efficient\\Model Stealing Defenses with Gradient Redirection}




\begin{icmlauthorlist}
\icmlauthor{Mantas Mazeika}{yyy}
\icmlauthor{Bo Li}{yyy}
\icmlauthor{David Forsyth}{yyy}
\end{icmlauthorlist}

\icmlaffiliation{yyy}{UIUC}

\icmlcorrespondingauthor{Mantas Mazeika}{mantas3@illinois.edu}

\icmlkeywords{Machine Learning, ICML}

\vskip 0.3in
]



\printAffiliationsAndNotice{}  

\begin{abstract}
Model stealing attacks present a dilemma for public machine learning APIs. To protect financial investments, companies may be forced to withhold important information about their models that could facilitate theft, including uncertainty estimates and prediction explanations. This compromise is harmful not only to users but also to external transparency. Model stealing defenses seek to resolve this dilemma by making models harder to steal while preserving utility for benign users. However, existing defenses have poor performance in practice, either requiring enormous computational overheads or severe utility trade-offs. To meet these challenges, we present a new approach to model stealing defenses called gradient redirection. At the core of our approach is a provably optimal, efficient algorithm for steering an adversary's training updates in a targeted manner. Combined with improvements to surrogate networks and a novel coordinated defense strategy, our gradient redirection defense, called $\text{GRAD}{}^2$, achieves small utility trade-offs and low computational overhead, outperforming the best prior defenses. Moreover, we demonstrate how gradient redirection enables reprogramming the adversary with arbitrary behavior, which we hope will foster work on new avenues of defense.
\end{abstract}

\begin{figure}[t]
\begin{center}
\includegraphics[width=0.46\textwidth]{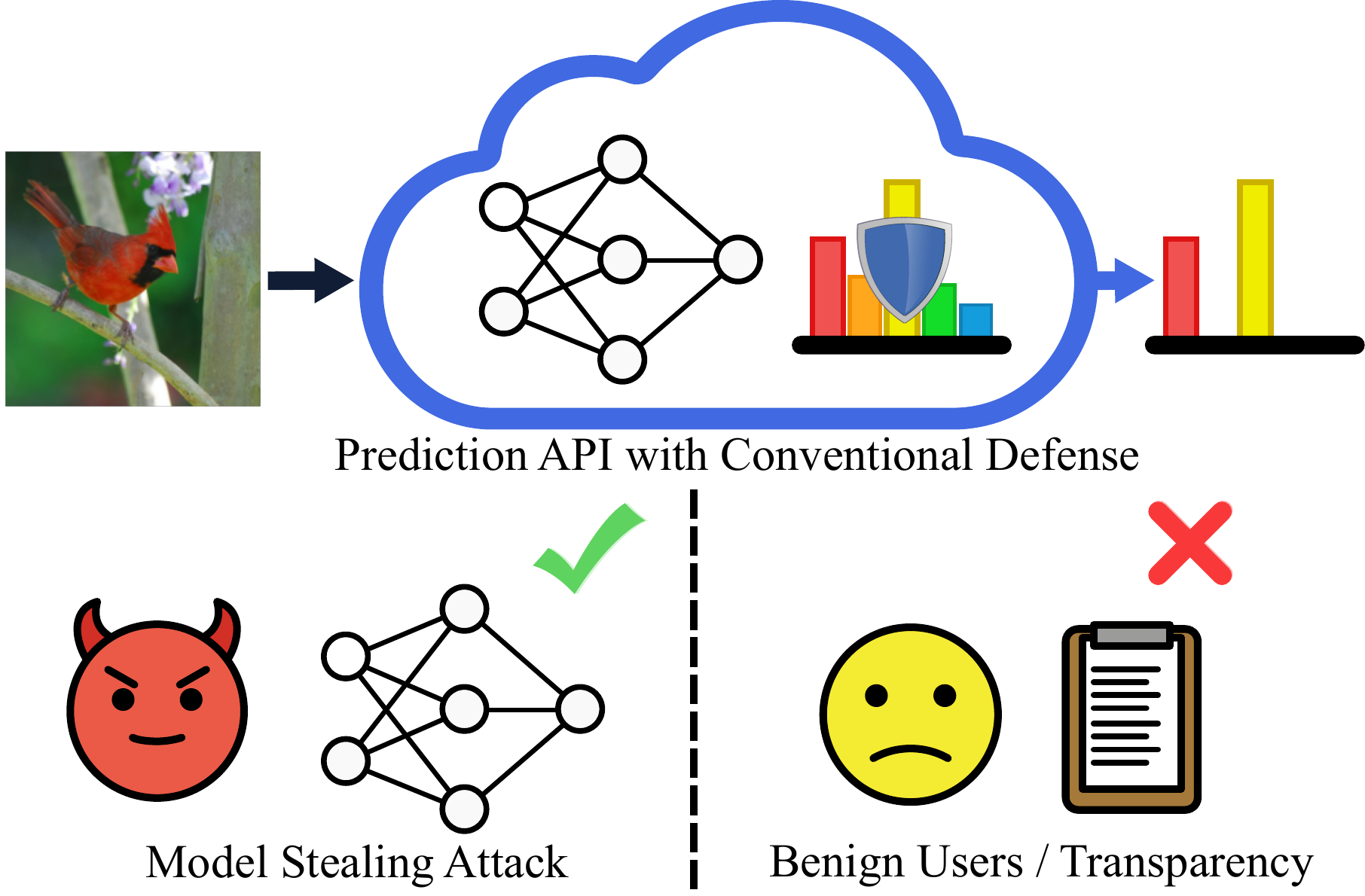}
\end{center}
\vspace{0pt}
\caption{
High-profile prediction APIs such as the OpenAI API and AI21 Studio truncate posteriors. This serves as a rudimentary defense against model stealing attacks, protecting financial investments. However, it also has significant trade-offs. Namely, it harms benign users who might otherwise benefit from the withheld predictions, and it reduces transparency.
}\label{fig:motivation_figure}
\vspace{-10pt}
\end{figure}

\section{Introduction}

As deep neural networks become more capable and economically valuable, responsibly democratizing access to the best available models could lead to widespread social good. Owners may not wish to just publish parameters, as this opens the door to malicious use and provides no direct return on investment \cite{radford2019better, buchanan2021truth}. Prediction APIs have emerged as a solution to these problems, as they enable filtering out harmful use cases and support a software-as-a-service business model. Moreover, they allow users with minimal computational resources to access the largest available models at reasonable costs.

The promise of prediction APIs is hampered by the fact that they are vulnerable to model stealing attacks. Malicious users can clone the functionality of an API by gathering a dataset of queries and responses and training their own model. This allows them to circumvent the expensive process of manual data curation, which can be a multi-million dollar investment for API providers \cite{hendrycks2021cuad, tramer2016stealing}. Additionally, model theft can be used as a stepping stone for mounting evasion attacks, which could result in critical failures in downstream applications using the API \cite{papernot2017practical}. In this paper, we investigate methods for defending against model stealing attacks.

In practice, prediction APIs often employ rudimentary obfuscation measures, such as truncating predicted posteriors to a minuscule fraction of their original size. For instance, the OpenAI API and AI21 Studio truncate to $2\%$ and $0.03\%$ of the available information, respectively. While this can protect against model stealing, it has major side effects. As illustrated in Figure \ref{fig:motivation_figure}, truncating posteriors reduces external transparency and harms benign users who might otherwise benefit from the withheld predictions.

Recent developments in model stealing defenses provide an avenue towards enabling API providers to share more information about their models without fear of extraction attacks. The objective of model stealing defenses is to make models harder to steal without substantially altering posterior predictions. In recent years, several works have investigated perturbation-based defenses that seek to maximize adversary error while minimally altering posteriors. However, these defenses have poor performance in practice, as they either incur infeasible computational overheads or require large utility trade-offs to be effective.

To overcome these challenges, we propose a new approach to perturbation-based model stealing defenses, which we call gradient redirection. At the core of our approach is an efficient, provably optimal algorithm for steering an adversary's training updates. Unlike prior methods, gradient redirection enables altering the trajectory of an extraction attack in a targeted manner, enabling a wide range of possible defenses. Using gradient redirection, we develop an efficient defense called $\text{GRAD}^2$ that incorporates improved surrogate networks and a novel coordinated defense strategy. In extensive experiments, we find that $\text{GRAD}^2$ outperforms prior defenses across multiple threat models. Moreover, we show how gradient redirection enables reprogramming the adversary with arbitrary behavior, including hidden watermarks. Experiment code is available at \url{https://github.com/mmazeika/model-stealing-defenses}.


\section{Related Work}
\paragraph{Model Stealing Attacks.}
Numerous works have explored the vulnerability of prediction APIs to model extraction attacks, where the adversary's goal is to obtain a copycat network with similar functionality to the prediction API. In early work, \citet{tramer2016stealing} identify a number of threat models and show that extraction of simple model classes is possible. For deep neural networks, \citet{papernot2017practical} show that model extraction is possible and can facilitate subsequent evasion attacks. Both these works assume the ability to adaptively probe the API with optimized queries, which can be accurately detected in some cases by monitoring query patterns \citep{8806737, pal2020activethief}. Thus, we focus our investigation on the case where adversary queries have no carefully crafted inputs or sequential structure. By leveraging knowledge distillation \citep{hinton2015distilling}, several works have shown that it is possible to steal the functionality of deep neural networks using weakly related queries \citep{orekondy2019knockoff} or even random queries \citep{krishna2019thieves}, although attacks with queries more related to the target task typically have stronger performance.\looseness=-1

\begin{figure}[t]
\begin{center}
\includegraphics[width=0.46\textwidth]{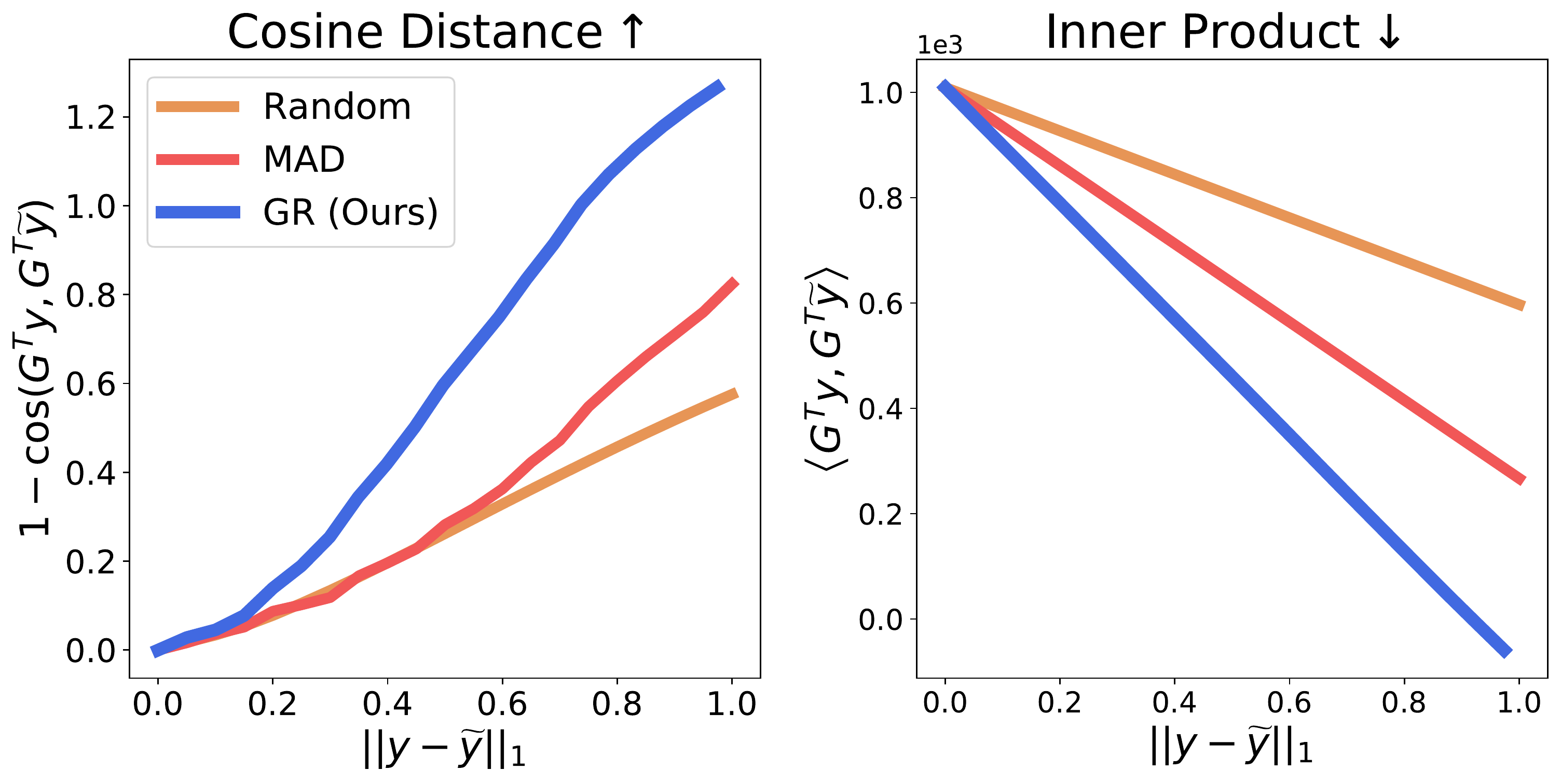}
\end{center}
\vspace{-10pt}
\caption{Compared to the previous state-of-the-art (MAD), gradient redirection (GR) results in larger perturbations to update gradients for a model stealing attack. MAD maximizes angular deviations with a heuristic algorithm (left), and GR minimizes inner product with a provably optimal algorithm (right). For both metrics, gradient redirection gives substantially more leverage.}\label{fig:surrogate_metrics}
\vspace{-10pt}
\end{figure}

\begin{figure*}[t]
\begin{center}
\includegraphics[width=\textwidth]{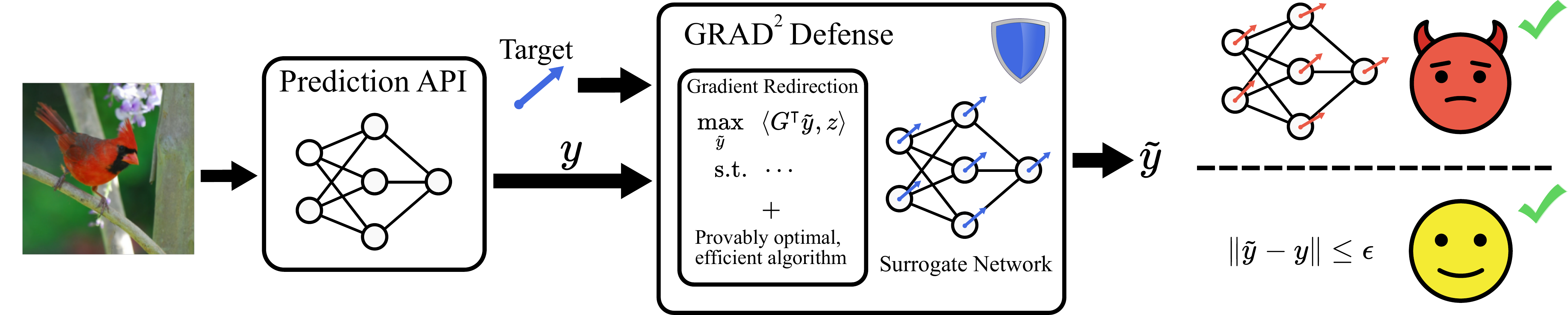}
\end{center}
\vspace{0pt}
\caption{
A user submits an image {\bf left} to a prediction API, producing posteriors $y$. Our defense outputs an adjusted posterior $\widetilde{y}$ to prevent a malicious user from stealing the model ({\bf red, right}). The adjustment is chosen by an efficient, provably optimal, algorithm to produce the largest error in a surrogate network's gradient, which transfers to the unknown adversary network. Extensive experiments show that the stolen model has a significant loss in accuracy. Moreover, the benign user is happy, because the adjustments are guaranteed to be small.\looseness=-1
}\label{fig:method_figure}
\vspace{-10pt}
\end{figure*}

\paragraph{Model Stealing Defenses.}
Defenses against model stealing fall into two complimentary approaches, which can be combined in a swiss cheese model to obtain robust protection \cite{hendrycks2021unsolved}. The first line of defense attempts to directly thwart model extraction attempts by returning modified or censored predictions from the API without significantly reducing performance for benign users. The simplest such defense is truncating posteriors or only outputting the predicted class \cite{tramer2016stealing}. This defense is used in real prediction APIs, such as the OpenAI API and AI21 Studio where posterior probabilities are truncated to the top $2\%$ and $0.03\%$ of values, respectively. Unfortunately, this reduces utility to benign users, precluding certain applications, and is harmful to external transparency.

Rather than truncate posteriors, several recent works have investigated slightly perturbing posteriors to derail extraction attempts. \citet{8844598} introduce ambiguity into clean posteriors, which lowers the accuracy of the stolen model while preserving the defender's accuracy. Building on this intuition, \citet{kariyappa2020defending} train a misinformation network to predict incorrect posteriors. Similarly, \citet{krishna2019thieves} replace clean posteriors with harmful posteriors based on how anomalous the query is \cite{hendrycks2018deep}. Note that this defense strategy assumes a setting where attackers have difficulty obtaining queries close to the defender's training distribution, which may not be true in practice.

The prediction poisoning approach introduced by \citet{Orekondy2020PredictionPT} demonstrates that posteriors can be optimally perturbed to derail the backward pass of an extraction attack, similar to adversarial examples for the forward pass \cite{szegedy2013intriguing}. They choose perturbations to poison the adversary's update gradient by maximizing angular deviation with the clean gradient. This results in a strong defense, but it involves computing the full Jacobian of the network's posterior, an immense overhead cost that scales with the number of labels. For some applications, such as language modeling, the number of labels can be in the tens or hundreds of thousands. This is noted by \citet{wallace2020imitation}, who focus on machine translation and adopt a best-of-k sampling procedure for selecting a perturbed prediction to derail the adversary's backward pass. However, this workaround still incurs an overhead of $k$ samples and may lead to a suboptimal defense. Building on these works, we develop a new approach for optimally redirecting an adversary's gradient in a targeted and efficient manner.

\paragraph{Watermarking Defenses.}
A second overarching approach to model stealing defenses is digital watermarking. Many works purposefully insert backdoors into networks \cite{10.1145/3196494.3196550, adi2018turning}, which enable proof of ownership. In the setting of model stealing, adversaries do not directly access model parameters and may be able to evade the watermark backdoor queries. Thus, different watermarking strategies are required to enable identifying models stolen through prediction APIs. \citet{szyller2021dawn} extend watermarks to this setting by outputting incorrect predictions for a small percentage of queried inputs. This enables the defender to prove ownership by querying the stolen model on these watermark examples. However, a downside of this defense is that the adversary knows its own training set, and thus has access to the watermarked inputs. In our investigation, we find that gradient redirection with strong surrogates enables black-box reprogramming of the adversary, which can insert watermarks into stolen models that are completely unknown to the adversary.
\section{Threat Model}
We consider interactions between a single attacker and defender. The defender allows users to access a deep neural network $g$ through a prediction API, and the attacker attempts to steal the functionality of the defender's model through querying the API and training a clone model $f$ on the resulting input-output pairs. The attacker's goal is to obtain high accuracy on the defender's test set.

\subsection{Attacker's Strategy}
The attacker chooses a query $x$ and receives a prediction from the defender's network. To avoid detection defenses \cite{8806737, chen2020stateful}, we assume the attacker sends queries that mimic benign queries. That is, we avoid adaptive querying strategies with unusual temporal structures \cite{papernot2017practical, orekondy2019knockoff}. In particular, we consider the Knockoff Nets attack without adaptive querying \cite{orekondy2019knockoff}, which is similar to knowledge distillation \cite{hinton2015distilling} and yields state-of-the-art attack performance \cite{Orekondy2020PredictionPT, kariyappa2020defending}.

\noindent\textbf{Knockoff Nets Attack.}\quad
Let $g$ be the defender's network, and let $f$ be the attacker's clone model parametrized by $\theta \in \mathbb{R}^d$. The attacker chooses a dataset of queries $\mathcal{Q}$ beforehand and sends queries $x \in \mathcal{Q}$ in random order. Let $y = g(x)$ be the defender network's posterior on query $x$. The loss of the adversary's clone model $f$ on the example $(x, y)$ is $H(y, f(x)) = -\sum_i y_i \log f(x)_i$. The adversary's update gradient on this example is the negative gradient of the loss with respect to the parameters $\theta$. Note that this update gradient can be written as
\begin{align*}
    -\nabla_\theta H(y, f(x; \theta)) &= \sum_i y_i \nabla_\theta \log f(x; \theta)_i \\
    &= y^\intercal G,
\end{align*}
where $G = \nabla_\theta \log f(x; \theta)$ is the $n \times d$ Jacobian matrix of the log-posteriors of $f$. Through collecting large numbers of queries and using standard techniques for training deep neural networks, the adversary can mount a successful extraction attack.

\noindent\textbf{Query Distribution.}\quad
An important variable in the adversary's attack is the choice of query dataset $\mathcal{Q}$. Prior work uses query datasets with varying levels of similarity to the defender's training distribution, ranging from highly similar \cite{8844598, tramer2016stealing} to unrelated \cite{Orekondy2020PredictionPT, kariyappa2020defending}. In some cases, collecting many queries similar to the defender's training distribution may be easy, and in other cases it may be challenging. Hence, we consider both possibilities in our threat model. We refer to adversaries as \textit{distribution-aware} if they use queries sharing semantic content with the defender's training distribution. We refer to adversaries as \textit{knowledge-limited} if they use queries with disjoint semantic content from the defender's training distribution.

\subsection{Defender's Objective}
The defender has no access to the adversary's parameters. Indeed, if the adversary mimics benign queries then the defender may not even know they are under attack. Thus, the challenge for the defender is to mitigate model stealing attempts at all times while preserving the utility of their API for benign users. This is typically accomplished by returning modified posteriors that make model stealing hard are representative of the clean posteriors.

A primary measure of utility to benign users is the defender's classification error on the test set. However, it is also important to consider the overall modification to returned posteriors, as benign users can derive substantial value from guarantees that the perturbed posteriors are close to the clean posterior. That is, a perturbed posterior $\widetilde{y}$ should satisfy $\|\widetilde{y} - y\|_1 \leq \epsilon$, where $y$ is the clean posterior and $\epsilon$ is small. A good defense will reduce the accuracy of clone models while minimally increasing classification error on the test set and $\ell_1$ distance to the clean posteriors.

\section{Approach}\label{sec:gradient_redirection}




A recent insight in the development of model stealing defenses is the notion of adversarial perturbations to the backward pass \cite{Orekondy2020PredictionPT, wallace2020imitation}. That is, perturbations to the defender's posterior can be designed to maximally change the attacker's update gradient. Unfortunately, existing methods for computing these perturbations are inefficient, requiring sample-based approaches or hundreds of backward passes for a single API query. Additionally, they lack flexibility as model stealing defenses, since they only seek to maximize angular deviation with the original update gradient. Here, we describe a new approach that we call gradient redirection, which pushes the attacker's update gradient in a target direction and is efficient to compute.\looseness=-1

\subsection{Gradient Redirection Problem}
For simplicity, we start by assuming white-box access to the adversary's network $f$. We operationalize the defender's goal as minimally perturbing the defender's posterior $y$ in order to maximally push the adversary's update gradient in a target direction $z \in \mathbb{R}^d$. For a given distillation example $(x, y)$, we want to solve the optimization problem
\begin{align}\label{equation:gradient_redirection}
    \max_{\widetilde{y}} \quad & \langle G^\intercal \widetilde{y},  z\rangle \\
    \textrm{s.t.} \quad & \mathbf{1}^\intercal \widetilde{y} = 1\nonumber \\
    & \widetilde{y} \succeq 0\nonumber \\
    & \| \widetilde{y} - y \|_1 \leq \epsilon,\nonumber
\end{align}
where $y$, $G = \nabla_\theta \log f(x; \theta)$, $z \in \mathbb{R}^d$, and $0 \leq \epsilon < 2$ are fixed. Typically we also have $y \in \Delta^{n-1}$, although for proofs we may have $\sum_i y_i < 1$. Note that this is a linear program, so in theory we could find the optimal $\widetilde{y}$ with performant LP solvers such as affine scaling variants of Karmarkar's algorithm \cite{adler1989implementation}. However, in real world cases, $n$ may be in the tens of thousands, requiring upwards of 10GB just to store the constraint matrix for a single distillation example. As prediction APIs often handle high volumes of queries, solving this optimization problem with existing methods would be far too costly in practice.

\begin{figure}[t]
\vspace{-7pt}
\begin{algorithm}[H]
\caption{Gradient Redirection}
\label{algorithm:gradient_redirection_l1}
\begin{algorithmic}
   \STATE {\bfseries Input:} $G$, $z$, $y$, $\epsilon$
   \STATE {\bfseries Output:} $\widetilde{y}$
   \STATE $\widetilde{y} \leftarrow y$
   \STATE $s \leftarrow \argsort(Gz)$
   \STATE $\widetilde{y}_{s_n} \leftarrow \min\!\left( y_{s_n} + \epsilon / 2,\, 1 \right)$
   \STATE $\lambda \leftarrow 0$
   \STATE $t \leftarrow 1$
   \WHILE{$t < n$}
   \STATE $\widetilde{y}_{s_t} \leftarrow \max\!\left( y_{s_t} - (\epsilon / 2 - \lambda),\, 0 \right)$
   \IF{$y_{s_t} - (\epsilon / 2 - \lambda) > 0$}
   \STATE {\bfseries Return} $\widetilde{y}$
   \ENDIF
   \STATE $\lambda \leftarrow \lambda + y_{s_t}$
   \STATE $t \leftarrow t + 1$
   \ENDWHILE
\end{algorithmic}
\end{algorithm}
\vspace{-20pt}
\end{figure}



\begin{figure*}[t]
\vspace{-5pt}
\begin{center}
\includegraphics[width=\textwidth]{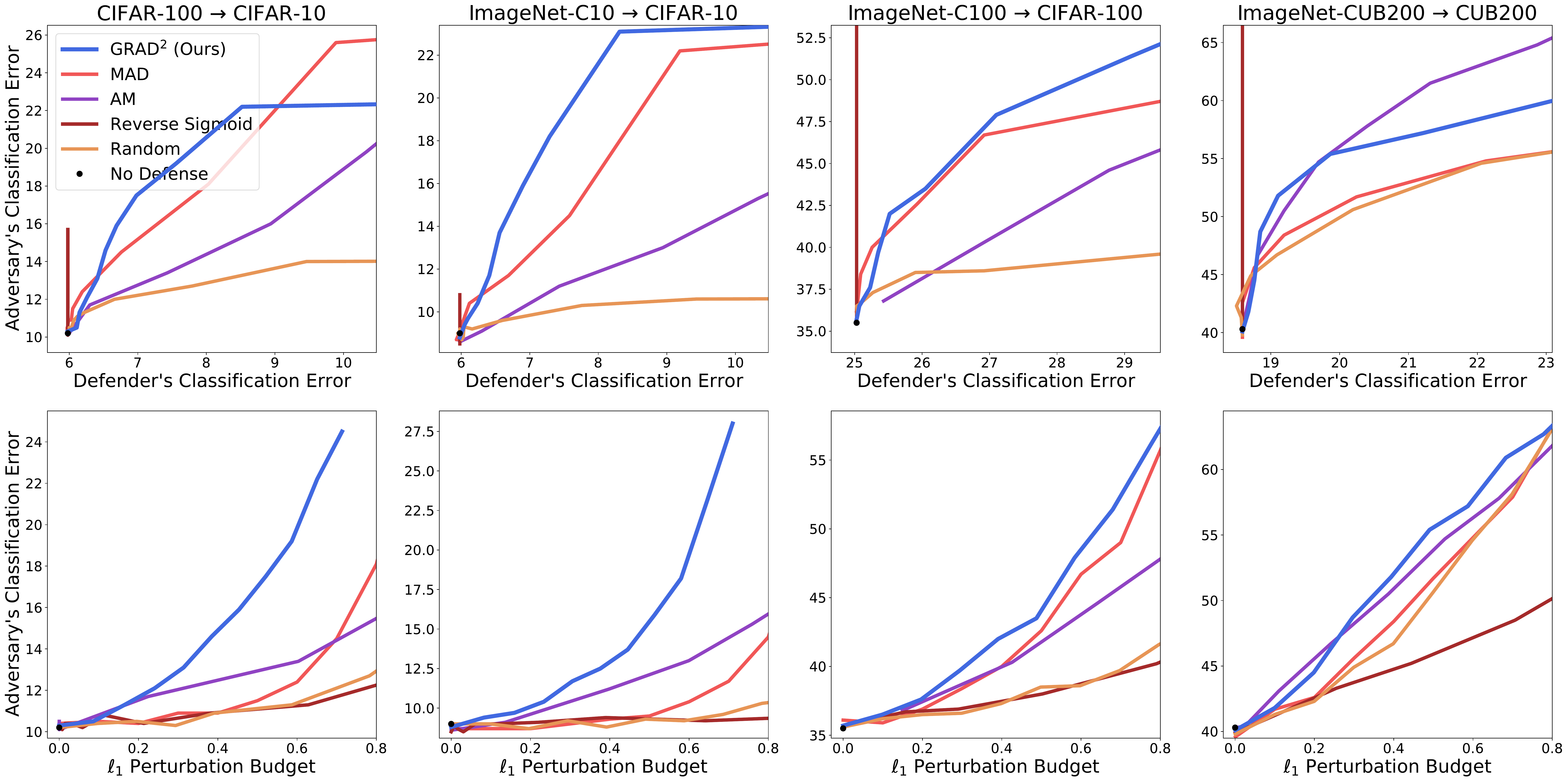}
\end{center}
\vspace{0pt}
\caption{
For most defense budgets, $\text{GRAD}^2$ induces higher classification error in the adversary than the best prior methods. While Reverse Sigmoid is efficient in terms of classification error, this performance requires an unreasonably large $\ell_1$ perturbation budget, which renders the defense unusable in practice. By contrast, $\text{GRAD}^2$ has strong performance on both budget metrics.
}\label{fig:plots1}
\vspace{-10pt}
\end{figure*}

\subsection{Gradient Redirection Algorithm}

We propose an efficient and provably optimal algorithm to solve the gradient redirection problem. First, we note that our problem is structurally similar to the fractional knapsack problem, which is solved in linearithmic time by a greedy algorithm \cite{dantzig201626}. Drawing from this connection, we propose a knapsack-like greedy algorithm for solving gradient redirection in Algorithm \ref{algorithm:gradient_redirection_l1}. The inner product objective (\ref{equation:gradient_redirection}) can be rewritten as $\widetilde{y}^\intercal G z$, where $Gz \in \mathbb{R}^n$ can be interpreted as a value vector.

Our algorithm initializes $\widetilde{y}$ as $y$. It then proceeds by taking probability mass from indices of $Gz$ with low values and putting as much mass as possible in the index of $Gz$ with the highest value. Intuitively, we are trading off less valuable indices for more valuable indices while taking care to respect the simplex constraint at each step. Our algorithm has two stopping conditions. In the first case, $\widetilde{y}_{s_n}$ attains $y_{s_n} + \epsilon / 2$. This means that we added $\epsilon / 2$ probability mass to $\widetilde{y}_{s_n}$. Hence, we removed $\epsilon / 2$ probability mass from other indices of $\widetilde{y}$, so $\left| \widetilde{y} - y\right|_1 = \epsilon$, i.e. we hit the budget constraint. In the second case, we have $\widetilde{y}_{s_n} = 1$, i.e. we hit the simplex constraint. In both cases, we have moved as much mass as possible from the least valuable indices into the most valuable index.

\begin{theorem}
Given a gradient redirection problem $(G, z, y, \epsilon)$ as formulated in (\ref{equation:gradient_redirection}),\, Algorithm \ref{algorithm:gradient_redirection_l1} outputs a globally optimal solution in $\mathcal{O}\!\left( n \log(n) \right)$ time.
\end{theorem}

The proof follows the common practice for greedy algorithms of establishing the greedy choice property and optimal substructure for a hierarchy of subproblems. The theorem then follows by induction. Please see the Supplementary Material for a full proof.

\noindent\textbf{Computing $Gz$ With Double Backprop.}\quad
Algorithm \ref{algorithm:gradient_redirection_l1} is fast for individual distillation examples $(x, y)$. However, it assumes that we are given $G = \nabla_\theta \log f(x; \theta)$, which itself requires $n$ backward passes through $f$ to compute. In model stealing defenses, $f$ represents a neural network, potentially with millions of parameters. Hence, directly computing $G$ is impractical. To obtain an optimal $\widetilde{y}$ when starting from the raw $(x, y)$ pair, we need a way to circumvent this computational bottleneck.

We solve this problem with double backpropagation. Note that $G$ is only used in Algorithm \ref{algorithm:gradient_redirection_l1} to compute the matrix-vector product $Gz$, which is the gradient with respect to $\widetilde{y}$ of $\langle G^\intercal \widetilde{y},  z\rangle$. We know $G^\intercal \widetilde{y}$ is the gradient of a cross-entropy loss with respect to $\theta$, since we have $G^\intercal \widetilde{y} = \sum_i \widetilde{y}_i \nabla_\theta \log f(x; \theta)_i = - \nabla_\theta H(\widetilde{y}, f(x; \theta))$. Thus, $G^\intercal \widetilde{y}$ can be computed with a single backward pass, and $Gz$ can be computed as $- \nabla_{\widetilde{y}} z^\intercal \nabla_\theta H(\widetilde{y}, f(x; \theta))$ by backpropagating through the computation graph representing the first backward pass, a procedure known as double backpropagation that is supported in many machine learning frameworks. We denote the composition of our gradient redirection algorithm with double backprop as $\text{GR}(f, x, y, z, \epsilon)$ for network $f$, query $x$, clean posterior $y$, target direction $z$, and perturbation budget $\epsilon$. The cost of computing $Gz$ with double backpropagation is roughly comparable to three additional forward passes on top of computing the output $f(x; \theta)$ (one for the first backward pass $+$ two for double backprop), rather than $n$ additional forward passes, as would be necessary when directly computing $G$.


\begin{table*}[t]\centering
\begin{tabularx}{\textwidth}{l *{18}{Y}}\toprule
&\multicolumn{6}{c}{ImageNet-C10 $\to$ CIFAR-10} &\multicolumn{6}{c}{ImageNet-C100 $\to$ CIFAR-100} &\multicolumn{6}{c}{ImageNet-CUB200 $\to$ CUB200} \\\cmidrule{2-19}
&\multicolumn{3}{c}{$\Delta$ Clf. Err} &\multicolumn{3}{c}{$\ell_1$ Distance} &\multicolumn{3}{c}{$\Delta$ Clf. Err} &\multicolumn{3}{c}{$\ell_1$ Distance} &\multicolumn{3}{c}{$\Delta$ Clf. Err} &\multicolumn{3}{c}{$\ell_1$ Distance} \\
Method &1 &2 &5 &0.1 &0.2 &0.5 &1 &2 &5 &0.1 &0.2 &0.5 &1 &2 &5 &0.1 &0.2 &0.5 \\\midrule
Random &9.8 &10.3 &10.6 &9.0 &8.7 &9.3 &38.5 &38.6 &39.8 &36.2 &36.5 &38.5 &48.5 &51.4 &56.0 &41.3 &42.3 &50.7 \\
Reverse Sigmoid &- &- &- &\underline{9.0} &9.1 &9.3 &- &- &- &36.3 &36.8 &38.0 &- &- &- &41.2 &42.6 &45.9 \\
Adaptive Mis. &10.4 &11.9 &16.3 &9.0 &\underline{9.6} &\underline{12.1} &38.2 &40.6 &46.6 &\underline{36.4} &\underline{37.4} &41.8 &\underline{53.8} &\textbf{58.6} &\textbf{66.8} &\textbf{42.8} &\textbf{45.6} &\underline{53.8} \\
MAD &\underline{12.6} &\underline{16.4} &\underline{22.6} &8.7 &8.7 &9.5 &43.0 &46.8 &49.2 &35.9 &36.9 &\underline{42.6} &49.6 &52.3 &56.0 &41.7 &42.6 &51.7 \\
$\text{GRAD}^2$ (Ours) &\textbf{16.4} &\textbf{21.5} &\textbf{23.4} &\textbf{9.5} &\textbf{10.1} &\textbf{15.5} &\textbf{43.4} &\textbf{47.6} &\textbf{53.0} &\textbf{36.5} &\textbf{37.7} &\textbf{44.1} &\textbf{54.1} &\underline{56.4} &\underline{60.7} &\underline{41.8} &\underline{44.6} &\textbf{55.6} \\
\bottomrule
\end{tabularx}
\caption{For distribution-aware adversaries, $\text{GRAD}^2$ increases the adversary's classification error more than state-of-the-art baselines for nearly all budgets and test conditions. All values aside from $\ell_1$ budgets are percentages. For fair comparison, dashes indicate cases where the $\ell_1$ Distance metric is untenable; see text. \textbf{Bold} is best and \underline{underline} is second-best.}\label{tab:distribution-aware}
\vspace{-5pt}
\end{table*}

\subsection{Model Stealing Defense}
We present a new model stealing defense called the Gradient Redirection Adversarial Distillation Defense, abbreviated GRAD${}^2$. Our defense is based on the gradient redirection algorithm. However, since the attacker's network is unknown to the defender, we cannot directly apply the algorithm. Thus, we represent the adversary with a surrogate network $h$ and design optimal perturbations for $h$ instead.

\noindent\textbf{Improved Surrogate Networks.}\quad
Although prior works have used surrogate networks for model stealing defenses \cite{Orekondy2020PredictionPT, wallace2020imitation}, very little is known about how to design good surrogates. In particular, an important unanswered question is whether perturbations designed for the surrogate actually transfer to the attacker's network in the first place. To fill this gap in understanding, we conduct a detailed analysis of various design choices for surrogate networks, reporting results in the Appendix. Our main findings are 1) Perturbations designed for surrogate networks do in fact transfer to the attacker's network, 2) We can train the surrogate on the attacker's queries to obtain better transfer and a stronger downstream defense, and 3) Early stopping of surrogate training leads to a stronger downstream defense. The second finding is a crucial difference between our surrogates and those in previous work. Namely, we notice that the only information we have about the adversary is the query set $\mathcal{Q}$. Thus, to make the surrogate more representative of the adversary, we can train the surrogate on this data using knowledge distillation from the defender's network. In online scenarios, this requires continual learning. For simplicity, we assume the adversary sends all queries in a batch before beginning training.

\noindent\textbf{Coordinated Defense.}\quad
A key advantage of gradient redirection is the freedom of choosing a target direction $z$. We investigate several choices of $z$ and discuss their properties. An intuitive choice is $z = \nabla_\theta H(y, h(x; \theta_h))$, where the target points opposite to the clean gradient. This setting of $z$ is similar to the MAD algorithm from \cite{Orekondy2020PredictionPT}, which finds $\widetilde{y}$ obtaining a high cosine distance with the clean gradient $1 - \cos\left( \widetilde{y}^\intercal G, y^\intercal G \right)$. In Figure \ref{fig:surrogate_metrics}, we compare this setting of $z$ to the MAD algorithm, finding that even though gradient redirection optimizes the inner product as opposed to cosine distance, we outperform MAD on both objectives. When incorporated into a full defense, perturbations from these defenses on two different examples may point in opposite directions and cancel out. Hence, we consider the possibility of coordinating the defense so that perturbations combine constructively rather than destructively.

The simplest possible coordinated defense uses $z = \mathbf{1}$, which pushes the attacker's parameters in the unhelpful direction of the all-ones vector. Since $z$ is constant and does not depend on $x$, it has the desirable property of invariance to batching. That is, computing $\text{GR}(h, x, y, \mathbf{1}, \epsilon)$ in parallel on a batch of inputs gives the same batch of output posteriors whether one obtains $Gz$ via per-example gradients or the batch gradient. In Figure \ref{fig:coordination}, we show that coordinated defenses outperform uncoordinated ones.

\begin{table}[t]\centering
\vspace{5pt}
\begin{tabularx}{0.45\textwidth}{lYYY}\toprule
& &\multicolumn{2}{c}{Attacker Accuracy} \\\cmidrule{3-4}
Eval Data &Defender Accuracy &Knowledge-Limited &Distribution-Aware \\\midrule
CIFAR-10 &94.0 &89.8 &91.0 \\
CIFAR-100 &75.0 &55.5 &64.5 \\
CUB200 &81.4 &58.7 &59.7 \\
\bottomrule
\end{tabularx}
\caption{Accuracy of the defender's classifier and stolen classifiers with no defense applied. Distribution-aware attacks are far more effective than knowledge-limited attacks and hence are more important to defend against.}\label{tab:no_defense}
\vspace{-20pt}
\end{table}

\noindent\textbf{Full $\text{GRAD}^2$ Method.}\quad
Our full gradient redirection defense combines Algorithm \ref{algorithm:gradient_redirection_l1} with our improved surrogates and coordinated defense strategy. Namely, we use a surrogate $h$ trained on the adversary's queries $\mathcal{Q}$ with early stopping after $E=10$ epochs, and we set $z = \mathbf{1}$.

\begin{table*}[t]\centering
\begin{tabularx}{\textwidth}{l *{18}{Y}}\toprule
&\multicolumn{6}{c}{CIFAR-100 $\to$ CIFAR-10} &\multicolumn{6}{c}{CIFAR-10 $\to$ CIFAR-100} &\multicolumn{6}{c}{Caltech256 $\to$ CUB200} \\\cmidrule{2-19}
&\multicolumn{3}{c}{$\Delta$ Clf. Err} &\multicolumn{3}{c}{$\ell_1$ Distance} &\multicolumn{3}{c}{$\Delta$ Clf. Err} &\multicolumn{3}{c}{$\ell_1$ Distance} &\multicolumn{3}{c}{$\Delta$ Clf. Err} &\multicolumn{3}{c}{$\ell_1$ Distance} \\
Method &1 &2 &5 &0.1 &0.2 &0.5 &1 &2 &5 &0.1 &0.2 &0.5 &1 &2 &5 &0.1 &0.2 &0.5 \\\midrule
Random &12.2 &12.8 &14.0 &10.4 &10.5 &11.1 &50.9 &52.1 &54.5 &\underline{46.5} &47.8 &50.6 &53.8 &58.1 &65.1 &43.1 &45.2 &57.1 \\
Reverse Sigmoid &- &- &- &\underline{10.7} &10.5 &11.1 &- &- &- &46.0 &46.9 &50.8 &- &- &- &42.7 &44.2 &49.7 \\
Adaptive Mis. &12.7 &14.3 &21.7 &\textbf{10.8} &\underline{11.5} &\underline{12.9} &47.6 &51.0 &\underline{60.2} &\textbf{47.5} &\textbf{50.6} &\textbf{61.2} &\textbf{64.7} &\textbf{70.6} &- &\underline{43.3} &45.6 &53.4 \\
MAD &\underline{15.1} &\underline{18.0} &\textbf{25.9} &10.5 &10.4 &11.5 &\underline{52.2} &\underline{53.6} &58.6 &45.1 &46.7 &52.0 &55.4 &57.7 &62.1 &\textbf{43.4} &\textbf{47.6} &57.1 \\
$\text{GRAD}^2$ (Ours) &\textbf{17.5} &\textbf{20.5} &\underline{22.4} &10.6 &\textbf{11.7} &\textbf{17.0} &\textbf{55.2} &\textbf{59.3} &\textbf{63.7} &46.3 &\underline{48.0} &\underline{56.8} &\underline{57.9} &\underline{60.7} &\textbf{65.2} &42.5 &\underline{46.1} &\textbf{58.3} \\
\bottomrule
\end{tabularx}
\caption{For knowledge-limited adversaries, $\text{GRAD}^2$ increases the adversary's classification error more than state-of-the-art baselines in most test conditions. All values aside from $\ell_1$ budgets are percentages. For fair comparison, dashes indicate cases where the $\ell_1$ Distance metric is untenable; see text. \textbf{Bold} is best and \underline{underline} is second-best.}\label{tab:knowledge-limited}
\vspace{-10pt}
\end{table*}

\section{Experiments}\label{sec:experiments}
\noindent\textbf{Datasets.}\quad
We use three evaluation datasets: CIFAR-10, CIFAR-100, and CUB200 \cite{krizhevsky2009learning, WelinderEtal2010}. For each evaluation dataset, we explore knowledge-limited and distribution-aware adversaries. The knowledge-limited query sets for the above evaluation datasets are CIFAR-100, CIFAR-10, and Caltech-256 respectively \cite{griffin2007caltech}. For distribution-aware adversaries, we construct query sets from ImageNet-1K by manually selecting overlapping classes \cite{deng2009imagenet}. This gives us ImageNet-C10, ImageNet-C100, and ImageNet-CUB200, which are paired with their matching evaluation set and contain $183,763$, $161,653$, and $30,000$ examples respectively.

\noindent\textbf{Training.}\quad
Our experiments have three stages. In the first stage, a defender network trains on each evaluation dataset. In the second stage, defense methods generate protected posteriors for each query set and evaluation dataset at various defense budgets. Finally, adversary networks train on the protected posteriors. We denote experiments with transfer data $\mathcal{Q}$ and evaluation data $\mathcal{D}$ as $\mathcal{Q} \rightarrow \mathcal{D}$. On CUB200, we fine-tune ResNet50 networks pre-trained on ImageNet. For other datasets, we train $40$-$2$ Wide ResNets from scratch. This allows us to gauge whether pre-training can substantially alter results. All networks are trained for $50$ epochs using SGD with Nesterov momentum of $0.9$. We use an initial learning rate of $0.01$ for CUB200 and $0.1$ for other evaluation datasets. The learning is annealed with a cosine schedule, and we use weight decay of $5 \cdot 10^{-4}$.

\noindent\textbf{Metrics.}\quad
We evaluate defenses with the adversary's classification error on the defender's test set for a given budget. Following \citet{Orekondy2020PredictionPT}, we use two budget metrics: defender classification error and $\ell_1$ distance between the modified posteriors and the clean posteriors averaged across the query set. For metrics in the tables, we report the increase in classification error. We denote these by ``$\Delta$ Clf. Err'' and ``$\ell_1$ Distance'' respectively. A high value for either budget metric renders the defender's network unusable, so strong defenses will induce large adversary errors for small values of both budget metrics. Thus, we focus our comparisons on values of $\Delta$ Clf. Err and $\ell_1$ Distance in a realistically acceptable range of trade-offs.

\begin{figure}[t]
\vspace{-5pt}
\begin{center}
\includegraphics[width=0.44\textwidth]{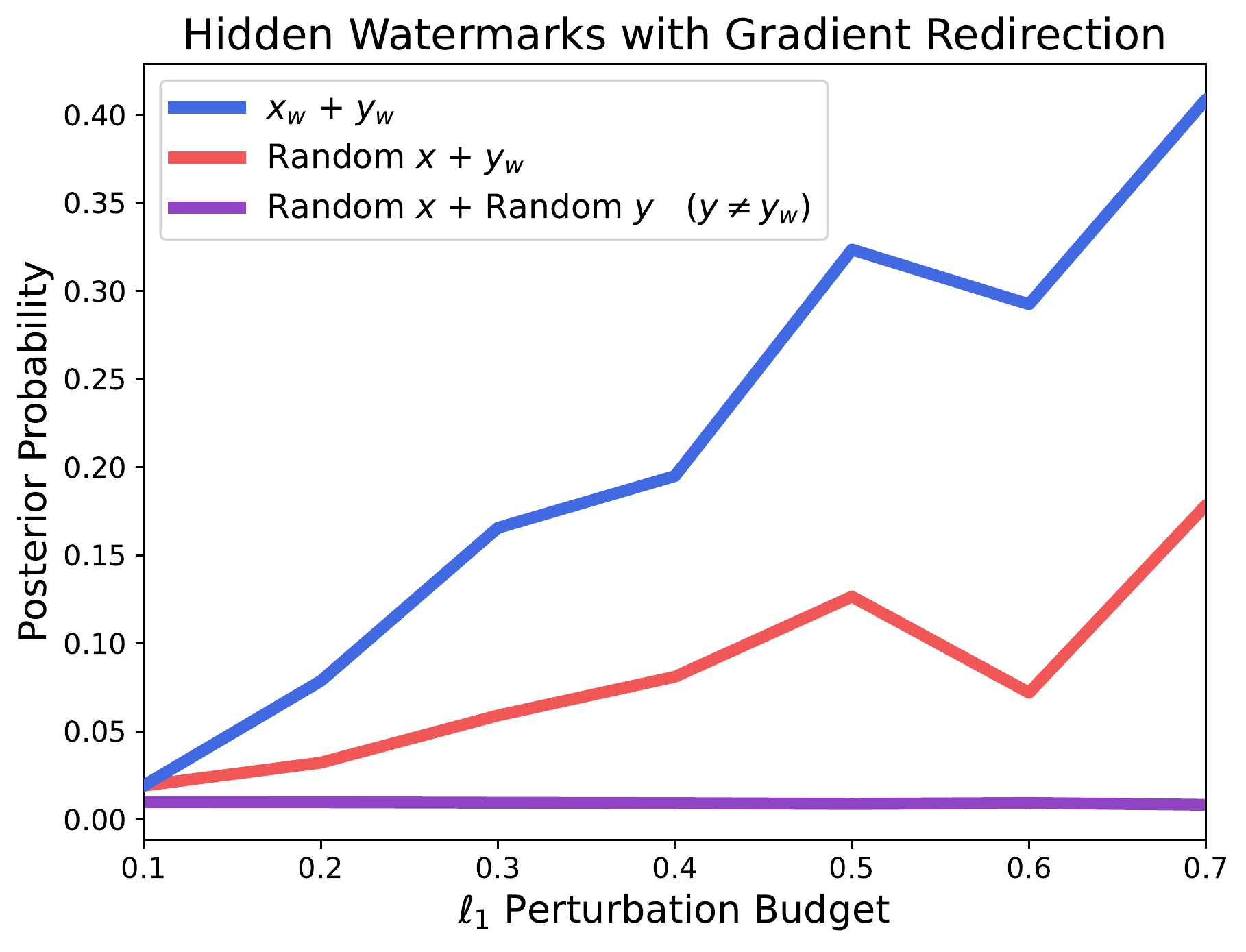}
\end{center}
\vspace{-15pt}
\caption{Gradient redirection enables reprogramming the adversary to predict a desired class on a hidden watermark image that only the defender knows. Posterior probabilities on non-target images and labels is lower than for the target input-output pair.}\label{fig:watermark}
\vspace{-15pt}
\end{figure}

\noindent\textbf{Baselines.}\quad
We compare $\text{GRAD}^2$ to several baseline defenses. \textit{No Defense:} The adversary trains on clean posteriors returned from the API. No attempt is made at defense. \textit{Random:} The defender's posterior is interpolated with a $1$-hot posterior, where the index of the nonzero entry is selected at random from all the labels that are different from the argmax prediction of the clean posterior. \textit{Reverse Sigmoid} \cite{8844598}: The defender's posterior is modified with a noninjective analogue of the sigmoid function, making it challenging for the adversary to exactly recover the clean posterior. \textit{Adaptive Misinformation (AM)} \cite{kariyappa2020defending}: An out-of-distribution (OOD) detector flags anomalous queries as adversarial and interpolates with posteriors from a misinformation network that misclassifies the defender's training set. Note that this method assumes that attacker queries will be OOD, which is a fundamentally different approach from ours. However, we can still compare performance. \textit{MAD} \cite{Orekondy2020PredictionPT}: A heuristic algorithm is used to find perturbed posteriors where the resulting perturbed gradient on a surrogate network has high angular deviation with the clean gradient. The surrogate network is randomly initialized.

\subsection{Comparing Defenses}
We compare defenses at practical values of $\Delta$ Clf Err and $\ell_1$ Distance and show the results in Tables \ref{tab:distribution-aware} and \ref{tab:knowledge-limited}. In Figure \ref{fig:plots1}, we visualize results. For a fairer comparison, tabular results are dashed out if $\ell_1$ Distance is greater than $1.0$, a conservatively large value. This is because the $\Delta$ Clf Err metric can be gamed by simply increasing the temperature of the posterior, which preserves classification error but destroys information. When considering both budget metrics, $\text{GRAD}^2$ matches or outperforms the best prior methods in most cases. Additionally, in terms of raw numbers $\text{GRAD}^2$ often outperforms other defenses by a significant margin. For example, at a classification error budget of $1\%$, our defense increases the error of an adversary seeking to steal a CIFAR-10 model with ImageNet-C10 queries from $9\%$ to $16.4\%$, a $30\%$ relative improvement over the next best method.\looseness=-1

\noindent\textbf{Balancing Both Budget Metrics.}\quad
By examining both the classification error and $\ell_1$ budgets of the defender, we see that methods which perform exceptionally well on one metric can perform very poorly on the other. Namely, Reverse Sigmoid is accuracy-preserving for a large range of hyperparameters, resulting in very high adversary error for minuscule increases in classification error budget. However, its $\ell_1$ Distance budget scales more quickly than all other baselines, indicating that it would not be very useful in practice. By contrast, $\text{GRAD}^2$ has balanced performance on both budget metrics.

\noindent\textbf{Robustness to Threat Model.}\quad
We find that varying the threat model assumptions can significantly affect some defenses. In particular, Adaptive Misinformation relies on being able to identify queries from adversaries with out-of-distribution detectors, so it may perform less well if queries are closer to the defender's training distribution. Accordingly, the gap between $\text{GRAD}^2$ and AM is larger for the distribution-aware adversaries than for knowledge-limited adversaries. With a distribution-aware adversary, $\text{GRAD}^2$ is still successful with a small budget, demonstrating robustness to variations in the threat model.

\subsection{Reprogramming The Adversary}
As we have full control over the target direction $z$ in Algorithm \ref{algorithm:gradient_redirection_l1}, a natural question is whether gradient redirection can be used for more than just increasing the adversary's error. We answer this in the affirmative by showing that in ideal conditions gradient redirection can reprogram adversaries to behave in a desired way on watermark images. Importantly, unlike prior work on watermarking defenses for prediction APIs \cite{szyller2021dawn}, we can select our watermark images at will. We are not restricted to using images that the adversary has sent as queries.

Let $(x_w, y_w)$ be an input-output pair that we want the adversary $f$ to predict. To demonstrate watermarking, we assume ideal conditions with white-box access to $f$. Let the gradient redirection target be $z = -\nabla_\theta H(y_w, f(x_w; \theta))$. This target is updated after every training step. We experiment with knowledge-limited adversaries on the CIFAR datasets. For each experiment, we insert a single $(x_w, y_w)$ watermark into $f$. We perform each experiment three times with different randomly-selected $(x_w, y_w)$ pairs, where $x_w$ is selected from the defender's test set, and we average results. In Figure \ref{fig:watermark}, we plot the posterior probability $f(x_w)_{y_w}$ at various perturbation budgets for the converged stolen model $f$. We also plot the average value of $f(x)_{y_w}$ when given random inputs $x$ from the test set and $f(x)_y$ for random $x$ and $y$. Gradient redirection enables reprogramming the adversary to have abnormally high posteriors on the watermark input-output pair.

\begin{figure}[t]
\vspace{5pt}
\begin{center}
\includegraphics[width=0.46\textwidth]{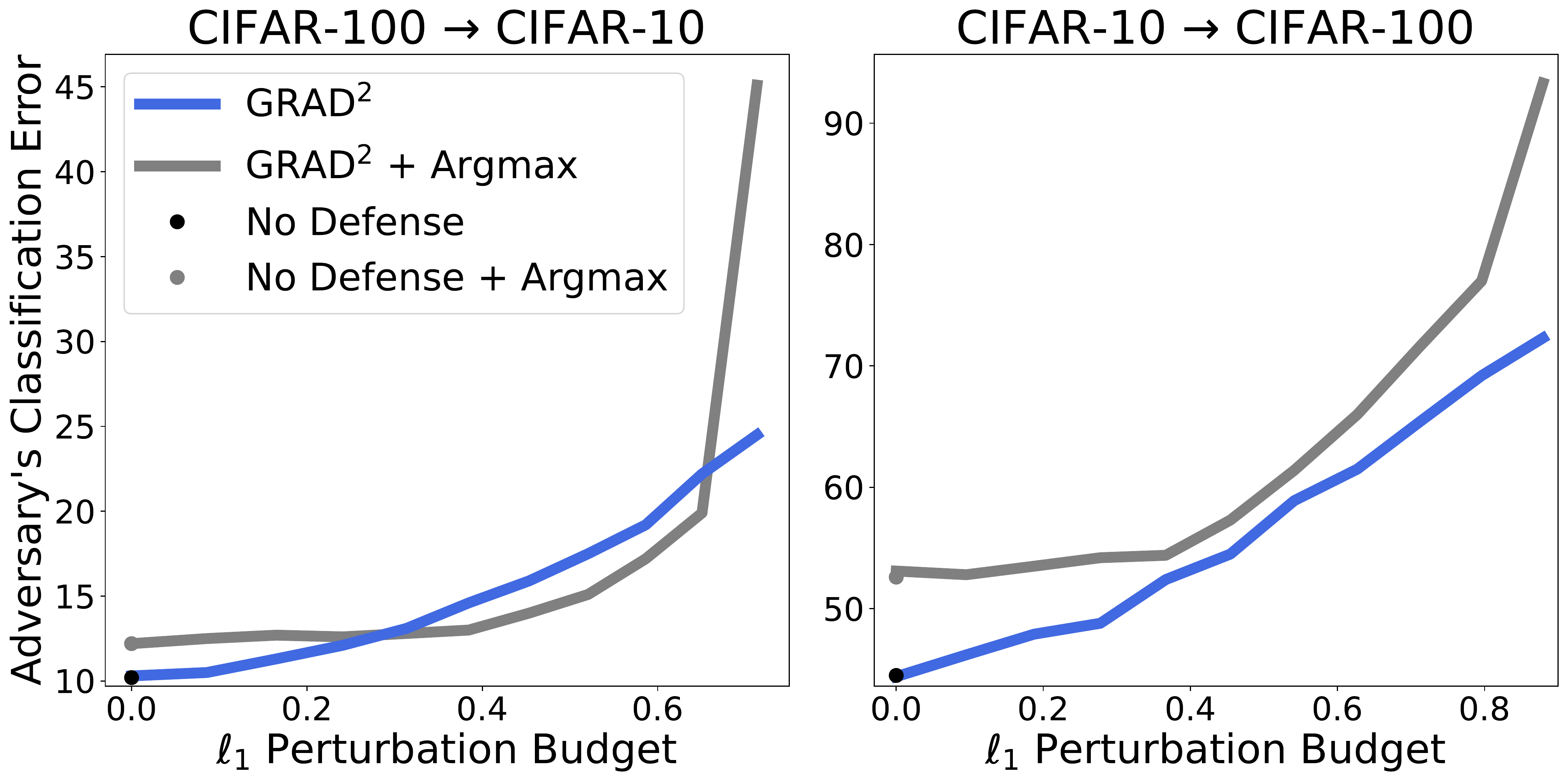}
\end{center}
\vspace{-5pt}
\caption{$\text{GRAD}^2$ remains a strong defense even when the adversary employs countermeasures. Training on the argmax label alone results in a higher error for the adversary in most settings, demonstrating the robustness of $\text{GRAD}^2$.}\label{fig:argmax}
\vspace{-10pt}
\end{figure}

\subsection{Adversary Countermeasures}
To evaluate how adversary countermeasures affect our $\text{GRAD}^2$ defense, we train knowledge-limited adversaries on CIFAR-10 and CIFAR-100 with the argmax label of the perturbed posteriors rather than the entire posterior. In Figure \ref{fig:argmax}, we plot results. We find that adversaries trained with this countermeasure obtain a higher initial error, although they can preserve accuracy in the face of more aggressive defenses. $\text{GRAD}^2$ remains a strong defense in the face of this countermeasure, with a higher adversary error in most cases, demonstrating the robustness of our approach.

\section{Conclusion}
We introduced gradient redirection, a new approach to model stealing defenses that enables modifying the adversary's update gradient in a targeted manner. We presented a provably optimal algorithm to efficiently solve gradient redirection problems, which we use to construct the $\text{GRAD}^2$ model stealing defense. In experiments, our defense outperformed all prior defenses and was robust to adversary countermeasures. Moreover, we showed that gradient redirection can be used to reprogram the adversary in a desired manner, which we hope will help foster further work on model stealing defenses.

\section*{Acknowledgements}
This work is partially supported by
NSF grant No.1910100,
NSF No.2046726, C3 AI, and the  Alfred P. Sloan Foundation.

\bibliography{main}
\bibliographystyle{icml2022}

\appendix
\section{Proving Optimality}
\subsection{Gradient Redirection Problem}
For simplicity, we start by assuming white-box access to the adversary's network $f$. We operationalize the defender's goal as minimally perturbing the defender's posterior $y$ in order to maximally push the adversary's update gradient in a target direction $z \in \mathbb{R}^d$. For a given distillation example $(x, y)$, we want to solve the optimization problem
\begin{align}\label{equation:gradient_redirection}
    \max_{\widetilde{y}} \quad & \langle G^\intercal \widetilde{y},  z\rangle \\
    \textrm{s.t.} \quad & \mathbf{1}^\intercal \widetilde{y} = 1\nonumber \\
    & \widetilde{y} \succeq 0\nonumber \\
    & \| \widetilde{y} - y \|_1 \leq \epsilon,\nonumber
\end{align}
where $y$, $G = \nabla_\theta \log f(x; \theta)$, $z \in \mathbb{R}^d$, and $0 \leq \epsilon < 2$ are fixed. Typically we also have $y \in \Delta^{n-1}$, although for establishing optimal substructure we may have that $\sum_i y_i < 1$. Note that this is a linear program reminiscent of knapsack problems.

\subsection{Gradient Redirection Algorithm}

We propose an efficient and provably optimal algorithm to solve the gradient redirection problem, which we describe in the main paper. First, note that the inner product objective (\ref{equation:gradient_redirection}) can be rewritten as $\widetilde{y}^\intercal G z$, where $Gz \in \mathbb{R}^n$ can be interpreted as a value vector. For brevity, let $c = Gz$, and let $s = \argsort(c)$, i.e. $c_{s_1} \leq c_{s_2} \leq \dotsb \leq c_{s_n}$.

Our algorithm initializes $\widetilde{y}$ as $y$. It then proceeds by taking probability mass from indices of $c$ with low values and putting as much mass as possible in $c_{s_n}$, which has the highest value of all dimensions in $c$. Intuitively, we are trading off less valuable indices for more valuable indices while taking care to respect the simplex constraint at each step. Our algorithm has two stopping conditions. In the first case, $\widetilde{y}_{s_n}$ attains $y_{s_n} + \epsilon / 2$. This means that we added $\epsilon / 2$ probability mass to $\widetilde{y}_{s_n}$. Hence, we removed $\epsilon / 2$ probability mass from other indices of $\widetilde{y}$, so $\left| \widetilde{y} - y\right|_1 = \epsilon$, i.e. we hit the budget constraint. In the second case, we have $\widetilde{y}_{s_n} = 1$, i.e. we hit the simplex constraint. In both cases, we have moved as much mass as possible from the least valuable indices into the most valuable index.

\begin{lemma}[Greedy Choice Property]\label{lemma:greedy_choice}
Let $(G, z, y, \epsilon)$ be a gradient redirection problem as formulated in (\ref{equation:gradient_redirection}),\, but with the budget constraint changed to $\left| \widetilde{y} - y \right|_1 + \lambda \leq \epsilon$, where $\lambda = 1 - \sum_i y_i$ ($\lambda = 0$ in the original problem). Let $y^*$ be an optimal solution. Then we must have (a): $y^*_{s_n} = \min\!\left( y_{s_n} + \epsilon / 2,\, 1 \right)$ and (b): $y^*_{s_1} = \max\!\left( y_{s_1} - (\epsilon / 2 - \lambda),\, 0 \right)$. Furthermore, if $y^*_{s_1} \neq 0$, then we have (c): $y^*_{s_t} = y_{s_t}$ for $1 < t < n$.
\end{lemma}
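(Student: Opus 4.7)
The plan is to prove all three claims by an exchange argument on the linear program. I would parametrize the feasible set via $\widetilde{y} = y + \Delta^+ - \Delta^-$, where $\Delta^+, \Delta^- \succeq 0$ have disjoint supports, and let $P = \mathbf{1}^\intercal \Delta^+$ and $N = \mathbf{1}^\intercal \Delta^-$. The simplex equality $\mathbf{1}^\intercal \widetilde{y} = 1$ combined with $\mathbf{1}^\intercal y = 1 - \lambda$ gives $P - N = \lambda$, and the modified budget constraint $|\widetilde{y}-y|_1 + \lambda \le \epsilon$ becomes $P + N + \lambda \le \epsilon$. Adding and subtracting these yields $P \le \epsilon/2$ and $N \le \epsilon/2 - \lambda$.

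Writing $c = Gz$, the objective decomposes as $\langle c, y\rangle + \langle c, \Delta^+\rangle - \langle c, \Delta^-\rangle$. Since $c_{s_n} = \max_i c_i$ and $c_{s_1} = \min_i c_i$, I would upper bound $\langle c, \Delta^+\rangle \le c_{s_n} P$ and lower bound $\langle c, \Delta^-\rangle \ge c_{s_1} N$, with each bound tight only when the corresponding support lies at extremal-value coordinates (up to the argsort tie-breaking convention). This yields the overall bound $(c_{s_n} - c_{s_1}) N + c_{s_n} \lambda + \langle c, y\rangle$, which is nondecreasing in $N$, so optimality forces $N$ (and hence $P = N + \lambda$) to be as large as the coordinate-wise constraints allow.

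For (a), the simplex constraint $\widetilde{y}_{s_n} \le 1$ caps $P$ at $\min(\epsilon/2,\,1 - y_{s_n})$, so placing all of $\Delta^+$ on $s_n$ gives $y^*_{s_n} = \min(y_{s_n} + \epsilon/2,\,1)$; the exchange argument is that if $y^*_{s_n}$ were strictly smaller, a small transfer of mass from any $s_i$ with $y^*_i > 0$ into $s_n$ would preserve both the simplex and budget constraints while strictly increasing the objective by $\delta(c_{s_n} - c_i) > 0$. For (b), the analogous cap is $N \le \min(\epsilon/2 - \lambda,\,y_{s_1})$, giving $y^*_{s_1} = \max(y_{s_1} - (\epsilon/2 - \lambda),\,0)$. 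For (c), if $y^*_{s_1} > 0$ and $y^*_{s_t} \ne y_{s_t}$ for some intermediate $t$, I would split on the sign of $y^*_{s_t} - y_{s_t}$: if it is positive, transferring $\delta$ from $s_t$ to $s_n$ gains $\delta(c_{s_n} - c_{s_t}) > 0$; if it is negative, transferring $\delta$ from $s_1$ to $s_t$ leaves $P$, $N$, and $|\Delta|_1$ unchanged but gains $\delta(c_{s_t} - c_{s_1}) > 0$, both contradicting optimality.

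The main obstacle will be the interaction between the simplex and budget constraints at the boundary and the presence of ties in $c = Gz$. Ties mean the extremal inner-product bounds are achieved by several configurations, so the equalities in (a)–(c) should be read modulo the argsort tie-breaking convention that the algorithm uses. Boundary regimes such as $y_{s_n} + \epsilon/2 > 1$ or $y_{s_1} < \epsilon/2 - \lambda$ require careful bookkeeping to verify that each proposed exchange remains feasible after accounting for the simplex slack at $s_n$ and the mass available at $s_1$; without this care, an exchange might violate the budget or drive a coordinate outside $[0,1]$.
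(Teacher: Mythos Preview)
Your approach is correct and takes a cleaner route than the paper. The paper proceeds by explicit case analysis on whether $y_{s_n} + \epsilon/2 \ge 1$, constructing competing feasible points by hand in each case; your $(\Delta^+,\Delta^-)$ parametrization recovers the key inequalities $P \le \epsilon/2$ and $N \le \epsilon/2 - \lambda$ in two lines from $P - N = \lambda$ and $P + N + \lambda \le \epsilon$, which makes the interaction of the simplex and budget constraints transparent. Both proofs ultimately rest on exchange arguments, but yours organizes them more systematically.

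Two refinements are worth noting. First, in your exchange for (a), ``any $s_i$ with $y^*_{s_i} > 0$'' is not literally correct: if the budget is tight and $y^*_{s_i} \le y_{s_i}$, the transfer increases $P+N$ by $2\delta$. The fix, which is exactly the ``careful bookkeeping'' you anticipate, is to transfer from some $j \ne s_n$ with $\Delta^+_j > 0$; such $j$ exists because $\Delta^+_{s_n} < \epsilon/2 = P$ in the tight-budget case. The analogous fix for (b) is to transfer from $s_1$ to some $j$ with $\Delta^-_j > 0$ rather than to $s_n$. Second, your separate exchange for (c) is unnecessary: once (a) and (b) give $|y^*_{s_n} - y_{s_n}| + |y^*_{s_1} - y_{s_1}| = \epsilon/2 + (\epsilon/2 - \lambda)$, the budget constraint $|y^* - y|_1 + \lambda \le \epsilon$ is already saturated by those two coordinates, forcing $y^*_{s_t} = y_{s_t}$ for all intermediate $t$. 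This is also how the paper handles (c), deriving it as a by-product of the accounting in (b).
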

\begin{proof}
First we will prove (a). Assume for contradiction that there is an optimal solution $y^\circ$ such that $y^\circ_{s_n} \neq \min\!\left( y_{s_n} + \epsilon / 2,\, 1 \right)$. Consider the first case, where $1 \leq y_{s_n} + \epsilon / 2$. This implies $y^\circ_{s_n} < 1$. Intuitively, we can find a feasible solution $v$ with $v_{s_n} = 1$ that obtains a higher objective value than $y^\circ$. Let $v$ be the posterior with $v_{s_n} = 1$ and $v_i = 0$ for $i \neq s_n$. In the present case, we have $1 - y_{s_n} \leq \epsilon / 2$, which implies $\left| v - y \right|_1 + \lambda = (1 - y_{s_n}) + \sum_{1 \leq t < n} y_{s_t} + (1 - \sum_i y_i) = 2(1 - y_{s_n}) \leq \epsilon$. Thus, $v$ is a feasible solution. Since $y^\circ$ and $v$ both sum to $1$, we know $v$ has a higher objective value than $y^\circ$, which is a contradiction.

Now consider the second case, where $1 > y_{s_n} + \epsilon / 2$. This implies $y^\circ_{s_n} < y_{s_n} + \epsilon / 2 < 1$. Intuitively, we can find a feasible solution $v$ that obtains a higher objective value than $y^\circ$ by moving mass into index $s_n$. Let $I$ be the indices where $y^\circ_i \geq y_i$, and let $J$ be the indices where $y^\circ_j < y_j$. We have $\left| y^\circ - y \right|_1 + \lambda = (\sum_{i \in I} y^\circ_i - y_i) +  (\sum_{j \in J} y_j - y^\circ_j) + 1 - \sum_i y_i = 1 + (\sum_{i \in I} y^\circ_i - 2y_i) -  \sum_{j \in J} y^\circ_j = 2\sum_{i \in I} y^\circ_i - y_i$, so we have $\sum_{i \in I} y^\circ_i - y_i \leq \epsilon / 2$. Suppose we have $\left| y^\circ - y \right|_1 + \lambda < \epsilon$. Then we can just increase $y^\circ_{s_n}$ and decrease other entries of $y^\circ$ to maintain the simplex constraint until we hit the budget constraint, which would yield a solution with a higher objective value. Now suppose we have $\left| y^\circ - y \right|_1 + \lambda = \epsilon$. This implies $\sum_{i \in I} y^\circ_i - y_i = \epsilon / 2$. In other words, the indices where $y^\circ_i > y_i$ account for a difference of $\epsilon / 2$, but they are not concentrated in $y^\circ_{s_n}$ by our assumption. Simply move them to $y^\circ_{s_n}$ to obtain another feasible solution with greater objective value than that of $y^\circ$. This is a contradiction for the second case, so we have proven (a).

Now we will prove (b). In the case where $1 \leq y_{s_n} + \epsilon / 2$, we know $y^*_{s_n} = 1$ and $y^*_i = 0$ for $i \neq s_n$ from part (a). We also have $y_{s_1} - (\epsilon / 2 - \lambda) < (\sum_{i \neq s_n} y_i) - \epsilon / 2 + 1 - \sum_i y_i = 1 - \epsilon / 2 - y_{s_n} < 0$, so $\max\!\left( y_{s_1} - (\epsilon / 2 - \lambda),\, 0 \right) = 0$, and we know $y^*_{s_1} = 0$. Hence, (b) is true in the case where $1 \leq y_{s_n} + \epsilon / 2$.

Now consider the case where $1 > y_{s_n} + \epsilon / 2$. Assume for contradiction that there is an optimal solution $y^\circ$ such that $y^\circ_{s_1} \neq \max\!\left( y_{s_1} - (\epsilon / 2 - \lambda),\, 0 \right)$. By (a), we know $y^\circ_{s_n} = y_{s_n} + \epsilon / 2$. Let $I$ be the indices where $y^\circ_i > y_i$, and let $J$ be the indices where $y^\circ_j \leq y_j$. By analogous argument to that in part (a), we have $\sum_{i \in I} y^\circ_i - y_i = \epsilon / 2$. That is, indices where $y^\circ_i > y_i$ account for a difference of exactly $\epsilon / 2$. But this means that $s_n$ is the only such index, so we have $y^\circ_j \leq y_j$ for $j \neq s_n$. Furthermore, we have $\left| y^\circ - y \right|_1 + \lambda = (\sum_{i \in I} y^\circ_i - y_i) +  (\sum_{j \in J} y_j - y^\circ_j) + \lambda = \epsilon / 2 + (\sum_{j \in J} y_j - y^\circ_j) + \lambda = \epsilon$, so we have $(\sum_{j \in J} y_j - y^\circ_j) = \epsilon / 2 - \lambda$. This means that the indices $j \neq s_n$ account for a difference of exactly $\epsilon / 2 - \lambda$. By concentrating this difference in $y^\circ_{s_1}$, we can improve the objective value while maintaining a feasible solution. If $y_{s_1} - (\epsilon / 2 - \lambda) \geq 0$, we will be able to concentrate all this difference into $y^\circ_{s_1}$, in which case $y^\circ_{s_t} = y_{s_t}$ for $ 1 < t < n$. If $y_{s_1} - (\epsilon / 2 - \lambda) < 0$, we will not be able to concentrate all of the difference into $y^\circ_{s_1}$ due to the nonnegativity constraint. However, concentrating as much of the difference as possible by setting $y^\circ_{s_1} = 1$ will still give a feasible solution with an improved objective value. This is a contradiction, so we have proven (b).

Now we will prove (c). Suppose $y^*$ is an optimal solution with $y^*_{s_1} \neq 0$. In the proof of part (b), we saw that this only happens when we have $y_{s_1} - (\epsilon / 2 - \lambda) \geq 0$. Furthermore, in this case we also have $y^*_{s_t} = y_{s_t}$ for $ 1 < t < n$, because we had to move all the remaining difference between $y^*$ and $y$ (outside of indices $s_n$ and $s_1$) into decreasing $y^*_{s_1}$ as much as possible. This proves (c).


\end{proof}

\begin{lemma}[Optimal Substructure]\label{lemma:optimal_substructure}
Let $(G, z, y, \epsilon)$ be a gradient redirection problem with the modified budget constraint from Lemma \ref{lemma:greedy_choice}. Namely, $\left| \widetilde{y} - y \right|_1 + \lambda \leq \epsilon$, where $\lambda = 1 - \sum_i y_i$. Let $y^*$ be an optimal solution with $y^*_{s_1} = 0$, and let $s = \argsort(Gz)$. Consider the subproblem $(G', z', y', \epsilon)$, where $G'$ has row $s_1$ removed, $z'$ and $y'$ have index $s_1$ removed. Then $y^*$ with index $s_1$ removed is an optimal solution to the subproblem. (Note that in the subproblem, $y$ may lie outside the simplex, but the simplex constraint for $\widetilde{y}$ is unchanged.)
\end{lemma}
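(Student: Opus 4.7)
The plan is a standard cut-and-paste argument: take the optimal solution $y^*$ to the original problem, restrict it to the $n-1$ coordinates outside $s_1$ to get a candidate $\widehat{y}^*$ for the subproblem, and show that any strictly better subproblem solution would lift back to a strictly better solution to the original problem, contradicting optimality. The whole argument is routine bookkeeping on constraints and objective values, so no inductive or combinatorial machinery is needed beyond what Lemma~\ref{lemma:greedy_choice} already established.

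First I would check feasibility of $\widehat{y}^*$ in the subproblem. Since $y^*_{s_1}=0$ and $\mathbf{1}^\intercal y^* = 1$, the restriction $\widehat{y}^*$ still sums to $1$ and is nonnegative, so the simplex constraint holds. For the budget, note $\lambda' := 1 - \sum_i y'_i = \lambda + y_{s_1}$, and since $y^*_{s_1}=0 \le y_{s_1}$ we have $\|\widehat{y}^* - y'\|_1 = \|y^* - y\|_1 - y_{s_1}$. Adding $\lambda'$ gives $\|\widehat{y}^* - y'\|_1 + \lambda' = \|y^* - y\|_1 + \lambda \le \epsilon$, so $\widehat{y}^*$ satisfies the subproblem's modified budget constraint. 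The objective values match because $y^*_{s_1}(Gz)_{s_1}=0$, so $\langle (G')^\intercal \widehat{y}^*, z\rangle = \langle G^\intercal y^*, z\rangle$.

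Second, I would carry out the lifting. Suppose for contradiction there is a feasible $u'$ for the subproblem with $\langle (G')^\intercal u', z\rangle > \langle (G')^\intercal \widehat{y}^*, z\rangle$. Define $u \in \mathbb{R}^n$ by $u_{s_1} = 0$ and $u_j = u'_j$ for $j \neq s_1$. Then $u \succeq 0$, $\mathbf{1}^\intercal u = \mathbf{1}^\intercal u' = 1$, and $\|u - y\|_1 + \lambda = \|u' - y'\|_1 + y_{s_1} + \lambda = \|u' - y'\|_1 + \lambda' \le \epsilon$, so $u$ is feasible for the original problem. Its objective $\langle G^\intercal u, z\rangle = \langle (G')^\intercal u', z\rangle$ strictly exceeds that of $y^*$, contradicting optimality of $y^*$. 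Therefore $\widehat{y}^*$ is optimal for the subproblem.

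The only place that requires a little care is the bookkeeping of $\lambda$ versus $\lambda'$ and the corresponding $\ell_1$ differences when $s_1$ is removed; the key identity driving everything is $\lambda' = \lambda + y_{s_1}$, which exactly compensates for the $y_{s_1}$ that disappears from $\|\cdot - y\|_1$ when $y^*_{s_1}=0$. I expect no genuine obstacle beyond keeping these constant terms aligned, since the simplex constraint on $\widetilde{y}$ is unchanged and the objective is linear in the coordinates so the $s_1$ coordinate contributes nothing under the hypothesis $y^*_{s_1}=0$.
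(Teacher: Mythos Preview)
Your proposal is correct and follows essentially the same cut-and-paste argument as the paper: verify that the restriction $\widehat{y}^*$ is feasible for the subproblem via the identity $\lambda' = \lambda + y_{s_1}$, then lift any hypothetically better subproblem solution by inserting a $0$ at index $s_1$ to contradict optimality of $y^*$. If anything, your version is slightly more explicit than the paper's in checking the budget constraint for the lifted solution $u$.
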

\begin{proof}
Let $y^*{}'$ denote $y^*$ with index $s_1$ removed. We know $\lambda' = 1 - \sum_i y'_i = \lambda + y_{s_1}$, and we know $\sum_i y_i^*{}' = \sum_i y_i^* = 1$. Thus, we have $\left| y^*{}' - y' \right|_1 + \lambda' = \left| y^* - y \right|_1 - \left| y^*_{s_1} - y_{s_1} \right| + \lambda + y_{s_1} = \left| y^* - y \right|_1 + \lambda \leq \epsilon$. Therefore, $y^*{}'$ is a feasible solution to the subproblem $(G', z', y', \epsilon)$. Assume for contradiction that there is a solution $y^\circ$ to the subproblem with a higher objective value than $y^*{}'$. Insert $0$ into position $s_1$ of $y^\circ$ to create a solution to the original problem with the same objective value. Inserting $0$ into position $s_1$ of $y^*{}'$ also creates a solution to the original problem with the same objective value. In fact, it recreates $y^*$. But $y^*$ is an optimal solution to the original problem, so its objective value cannot be lower than that of the expanded $y^\circ$. This is a contradiction, so our assumption must be false, which means that $y^*{}'$ is optimal for the subproblem.
\end{proof}

\begin{theorem}
Given a gradient redirection problem $(G, z, y, \epsilon)$ as formulated in (\ref{equation:gradient_redirection}),\, Algorithm \ref{algorithm:gradient_redirection_l1} outputs a globally optimal solution in $\mathcal{O}\!\left( n \log(n) \right)$ time.
\end{theorem}

\begin{proof}
By part (a) of Lemma \ref{lemma:greedy_choice}, we know that the initialization steps of Algorithm \ref{algorithm:gradient_redirection_l1} set $\widetilde{y}_{s_n}$ to the value of the optimal solution. If $\widetilde{y}_{s_n}$ is set to $1$, then the while loop will proceed until its stopping condition without altering $\widetilde{y}$ at other indices, returning an optimal $\widetilde{y}$. Now consider the case where $\widetilde{y}_{s_n}$ is set to a value less than $1$, then we know $y$ has at least $\epsilon / 2$ mass at indices other than $s_n$. This means $\lambda + y_{s_t}$ will eventually exceed $\epsilon / 2$, at which point the if-then condition will be entered and the algorithm will return.

We proceed by induction, starting where the if-then condition is entered and continuing backwards through the while loop to the first step. Note that when the if-then condition is entered, we have just set $\widetilde{y}_{s_t}$ to $y_{s_t} - (\epsilon / 2 - \lambda)$. Let $(G', z', y', \epsilon)$ be the subproblem with $s_j$ indices removed for $j < t$. In this subproblem, $\widetilde{y}_{s_t}$ is replaced by $\widetilde{y}'_{s_1}$. From part (b) of Lemma \ref{lemma:greedy_choice}, we know $\widetilde{y}'_{s_1}$ is set to an optimal value for this subproblem. Moreover, from part (c) and the early exit step of the algorithm, we know that the entirety of $\widetilde{y}'$ returned at this point is an optimal solution to the subproblem, not just the values at indices $s_1$ and $s_{n-t}$. This gives us a base case. For the induction step, assume that $\widetilde{y}'$ is optimal at step $t > 1$ of the while loop. We want to show that $\widetilde{y}$ is optimal for the subproblem $(G, z, y, \epsilon)$ at step $t - 1$ of the while loop. By part (b) of Lemma \ref{lemma:greedy_choice}, we know that $\widetilde{y}'_{s_1}$ is set to an optimal value in the previous step of the while loop. By Lemma \ref{lemma:optimal_substructure}, we know that simply appending this value to the optimal solution of the subproblem at step $t$ of the while loop gives us an optimal solution to the subproblem at step $t-1$. By induction, the complete $\widetilde{y}$ returned by Algorithm \ref{algorithm:gradient_redirection_l1} is an optimal solution to the original gradient redirection problem.

Asymptotically, the most expensive step is the sorting operation. All other steps are local comparisons and operations on pairs of array elements or scalars, and hence have $\mathcal{O}\!\left( n \right)$ time complexity. Thus, the overall time complexity is $\mathcal{O}\!\left( n \log(n) \right)$.
\end{proof}

\section{Additional Experiments}
In Figures \ref{fig:surrogate_analysis} and \ref{fig:coordination}, we analyze the effect of our design choices for the $\text{GRAD}^2$ defense. Namely, we show how the surrogate ability to transfer to the adversary makes a large difference in downstream performance. In Figure \ref{fig:coordination}, we plot the performance of a coordinated defense and uncoordinated defense. These analyses identify ways to build stronger defenses which future work could build on. Additionally, they highlight the general importance of surrogate networks and destructive interference, which prior work did not investigate in detail.

\subsection{Improved Surrogate Networks}
We analyze the effect of design choices for the surrogate network. Namely, we show how the surrogate network's ability to transfer to the adversary makes a large difference in the downstream performance of gradient redirection defenses. Let $f$ and $h$ be an adversary and surrogate network, respectively. Let $\theta$ be the parameters of $f$ that the adversary trains during a model stealing attack. We compute Transfer Performance for the surrogate network $h$ at perturbation budget $\epsilon$ as
\[\frac{1}{\left| \mathcal{Q} \right|} \sum_{x \in \mathcal{Q}}\cos\left( \widetilde{y}^T \nabla_\theta \log f(x; \theta), z \right) - \cos\left( y^T \nabla_\theta \log f(x; \theta), z \right)\]

where $y$ is the defender model's output for query $x$, $z$ is the gradient redirection target, and $\widetilde{y} = \text{GR}(h, x, y, \mathbf{1}, \epsilon)$ is the output of gradient redirection on the surrogate network. Transfer Performance measures the increase in cosine similarity between the perturbed gradient and the target direction $z$ compared to the negative control of the cosine similarity between the clean gradient and $z$. Thus, Transfer Performance greater than zero indicates successful transfer, and values less than or equal to zero indicate failed transfer. While any gradient redirection target could be used, we focus on the all-ones target $z = \mathbf{1}$ in this analysis.

\noindent\textbf{Setup.}\quad
The surrogate network analysis is in Figure \ref{fig:surrogate_analysis}. We train surrogates using the Knockoff Nets objective with the query distribution $\mathcal{Q}$ (sold lines) and the defender's training distribution $\mathcal{D}$ (dashed lines). In both cases, we train surrogates for $50$ epochs and perform early stopping after $E$ epochs, where $E \in \{0, 10, 20, 30, 40\}$. We then train three independent adversaries for $50$ epochs each, saving snapshots at each epoch. We compute Transfer Performance for a surrogate at each epoch of the adversary's training, averaging over the three independent adversary training runs. We also compute the converged adversary's classification error when using the surrogates in a $\text{GRAD}^2$ defense.

\noindent\textbf{Training Surrogates on the Query Distribution.}\quad
The only concrete information the defender has about an adversary is the query distribution. A natural question is whether we can leverage this information to build a stronger defense. In the top row of Figure \ref{fig:surrogate_analysis}, we see that Transfer Performance is higher for surrogates trained on $\mathcal{Q}$. In the bottom row, we see that this translates to a stronger downstream defense for trained surrogates. This suggests that better transfer from the surrogate to the adversary is in fact valuable for the downstream defense. Note that prior works found that trained surrogates yield weaker defenses than random surrogates \cite{Orekondy2020PredictionPT}, which may be an artifact of training the surrogates on $\mathcal{D}$ instead of $\mathcal{Q}$. We show that training on $\mathcal{Q}$ causes trained surrogates to outperform random surrogates by a significant margin.

\noindent\textbf{Early Stopping of Surrogates.}\quad
In the top row of Figure \ref{fig:surrogate_analysis}, we can see that surrogates that train for longer transfer better to adversaries at later stages of training, and early stopping of surrogates yields better transfer to adversaries at the initial stages of training. This provides further evidence that similarity to the adversary affects transfer and raises the question of whether early transfer or late transfer is more advantageous for a downstream defense. For both CIFAR-10 and CIFAR-100, we find that early stopping at $E=10$ epochs yields the strongest defense in most cases, while values of $E \in \{20, 30, 40\}$ perform similarly. This suggests that early transfer is more important than late transfer.

\subsection{Runtime Comparison}
In Table \ref{tab:runtime}, we compare the runtime of MAD and $\text{GRAD}^2$ on queries from CIFAR-10, CIFAR-100, and CUB200. We report the number of seconds to generate a perturbed posterior averaged across the test set. In all cases, $\text{GRAD}^2$ is substantially faster, and relative performance scales with the number of labels; $\text{GRAD}^2$ is $3.75 \times$ faster on CIFAR-10, $4.32 \times$ faster on CIFAR-100, and $6.33 \times$ faster on CUB200.

\begin{figure}[t]
\vspace{5pt}
\begin{center}
\includegraphics[width=0.46\textwidth]{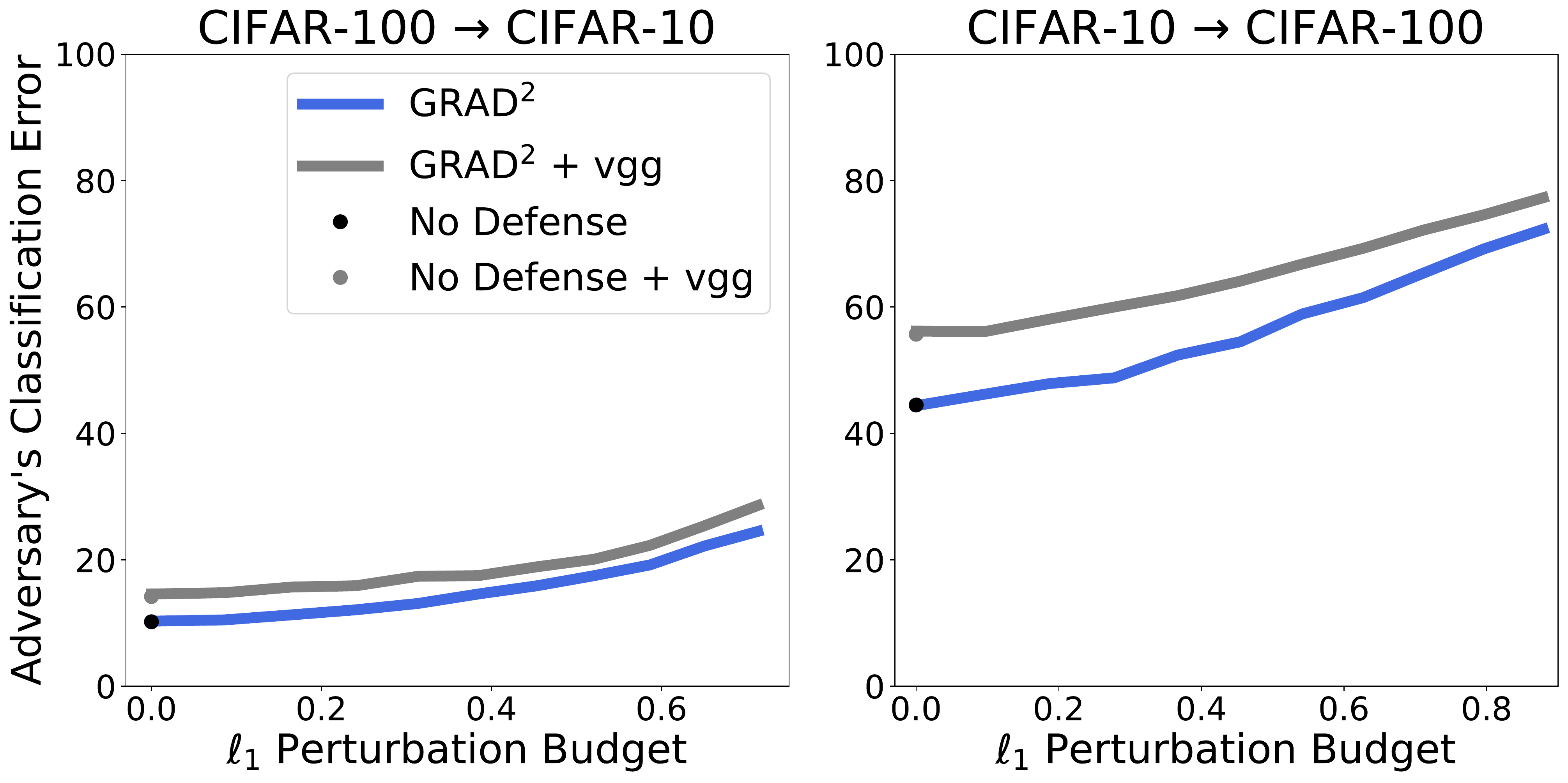}
\end{center}
\vspace{-5pt}
\caption{Posterior perturbations designed for a particular surrogate remain effective when the adversary uses a different network architecture.}\label{fig:architecture_robustness}
\end{figure}

\subsection{Robustness Across Architectures}\label{app:architecture_robustness}
An important practical consideration for using gradient redirection defenses is whether surrogates remain effective if the adversary uses a different neural network architecture than the surrogate. In Figure \ref{fig:architecture_robustness}, we evaluate $\text{GRAD}^2$ against adversaries using a different architecture than the standard surrogate used throughout the paper. In particular, we keep the same WRN-40-2 surrogate and posterior perturbations from the main experiments, but we change the adversary's architecture to VGG-16. We find that $\text{GRAD}^2$ remains a strong defense in this setting, with almost no decrease in slope to the performance profile. This suggests that surrogates do transfer to different architectures.

\begin{table}
\begin{center}
\begin{tabular}{lccc}
\toprule
    Method & CIFAR-10 & CIFAR-100 & CUB200 \\ \midrule
MAD & 0.15 & 1.21 & 2.66 \\
$\text{GRAD}^2$ & 0.04 & 0.28 & 0.42 \\
\bottomrule
\end{tabular}
\end{center}
\caption{Average time in seconds to generate a perturbed posterior for a single query on an NVIDIA A40 GPU. $\text{GRAD}^2$ is significantly faster than MAD.}
\label{tab:runtime}
\end{table}

\begin{figure*}[t]
\begin{center}
\includegraphics[width=\textwidth]{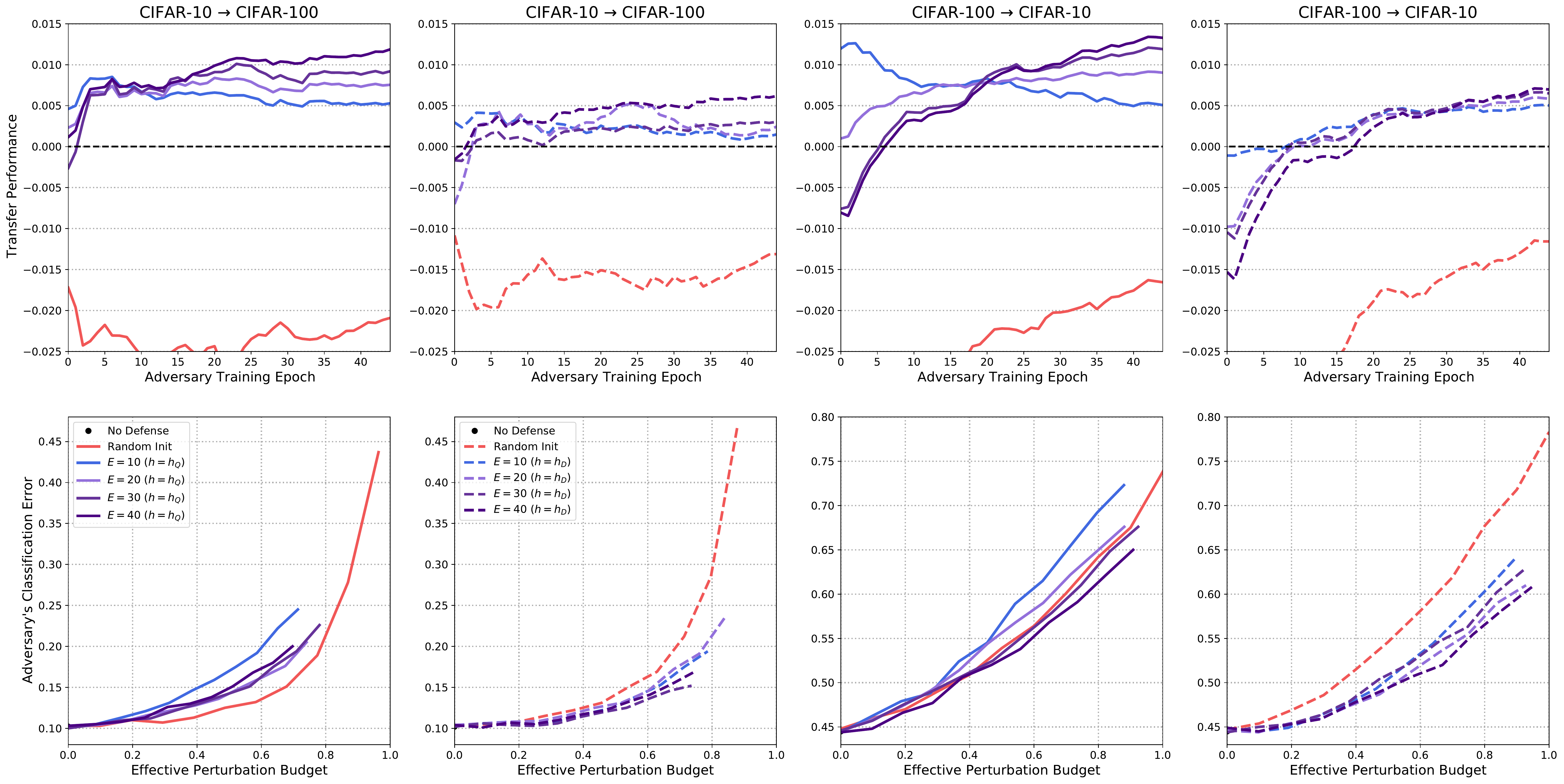}
\end{center}
\vspace{0pt}
\caption{
Surrogate analysis experiments. Top: Surrogates do transfer to adversaries with unknown parameters; Surrogates with early stopping after a small number of epochs transfer better to adversaries early in their training, whereas surrogates with early stopping after a larger number of epochs transfer better to adversaries later in their training; Surrogates trained with the query distribution $\mathcal{Q}$ (solid lines) transfer more effectively than surrogates trained with the defender's training distribution $\mathcal{D}$ (dashed lines); Random surrogates transfer very poorly (red lines). Bottom: Surrogate transfer matters, as stronger transfer results in a stronger downstream defense (higher adversary error for a given $\ell_1$ budget); Surrogates with early stopping at low epochs ($E=10$) yield a stronger downstream defense across datasets (blue lines); Randomly initialized surrogates (red lines) yield in a relatively weaker downstream defense, which surrogates trained on $\mathcal{Q}$ outperform. Notably, the MAD method uses randomly initialized surrogates, which transfer very poorly and underperform our surrogates trained on $\mathcal{Q}$ with early stopping at $E=10$, which are used in the $\text{GRAD}^2$ defense.
}\label{fig:surrogate_analysis}
\vspace{-10pt}
\end{figure*}

\begin{figure}[t]
\begin{center}
\includegraphics[width=0.46\textwidth]{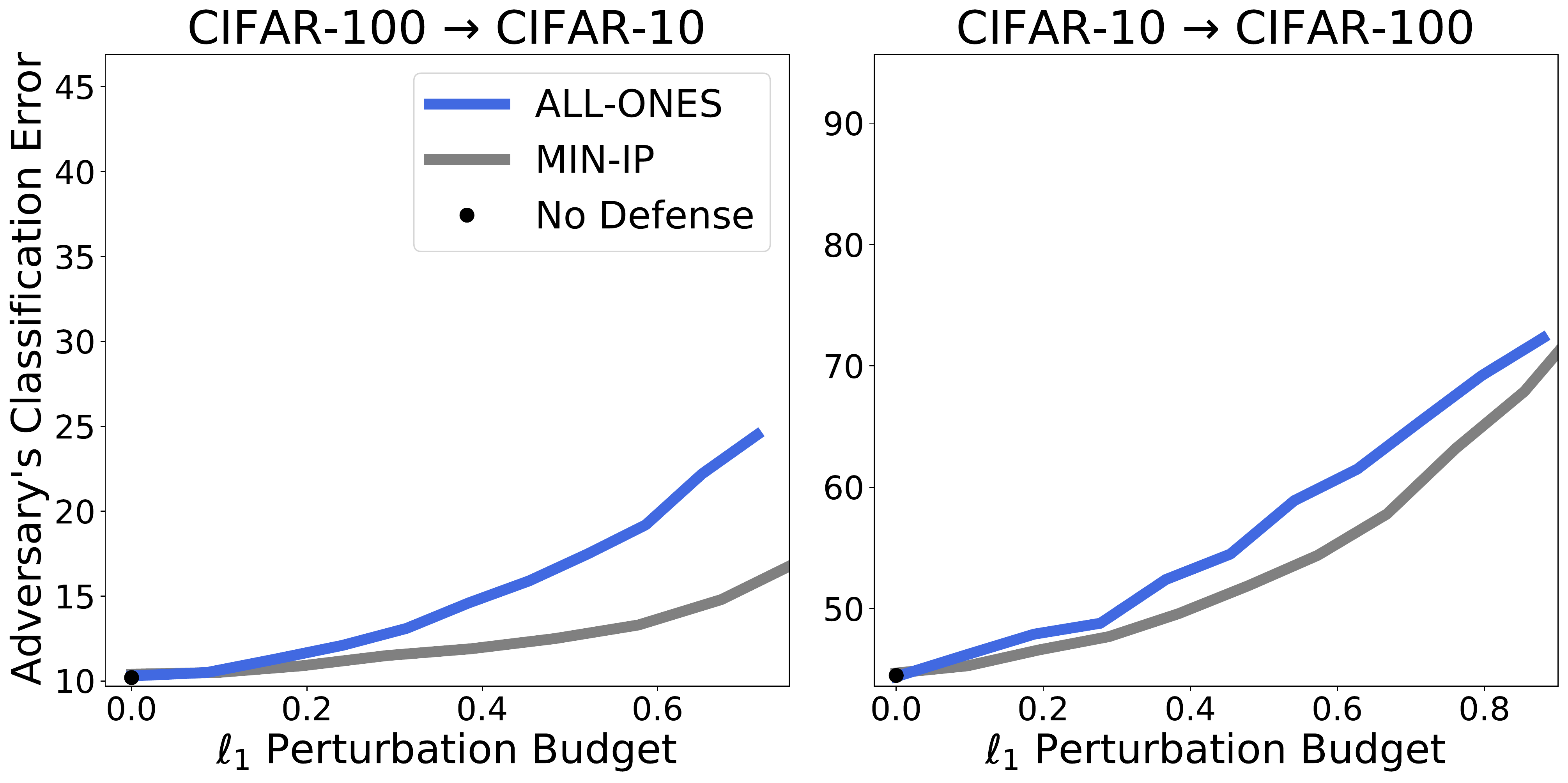}
\end{center}
\vspace{-10pt}
\caption{When we set the gradient redirection target $z$ to the all-ones vector, this results in a defense that perturbs the adversary in a coordinated manner, providing that the perturbations transfer from the surrogate to the adversary. Perturbing away from the clean gradient, i.e. minimizing the inner product, may result in destructive interference of perturbations across the training set. Here, we measure the performance of these two choices of $z$, labeled ALL-ONES and MIN-IP respectively, where MIN-IP corresponds to $z = \nabla_\theta H(y, h(x; \theta_h))$ pointing opposite the clean. ALL-ONES corresponds to $z = \mathbb{1}$, which does not depend on $x$ and thus has a coordinated effect across training batches. We find that our coordinated defense outperforms the uncoordinated defense by a substantial margin.}\label{fig:coordination}
\vspace{-10pt}
\end{figure}








\end{document}